\title{The Elliptical Potential Lemma Revisited}
\author[1]{Alexandra Carpentier}
\author[2]{Claire Vernade}
\author[2]{Yasin~Abbasi-Yadkori}
\affil[1]{Otto Von Guericke Universit\"at Magdeburg}
\affil[2]{DeepMind}
\date{}
\newtheorem{lemma}{Lemma}
\newtheorem{proposition}{Proposition}
\newcommand{\tr}{\textsc{Tr}}
\newcommand{\LinUCB}{\textsc{LinUCB}}
\newcommand{\LinRel}{\textsc{LinRel}}
\newcommand{\oful}{\textsc{oful}}
\newcommand{\R}{\mathds{R}}
\begin{document}

\maketitle

\begin{abstract}
    This note proposes a new proof and new perspectives on the so-called Elliptical Potential Lemma. This result is important in online learning, especially for linear stochastic bandits. The original proof of the result, however short and elegant, does not give much flexibility on the type of potentials considered and we believe that this new interpretation can be of interest for future research in this field. 
\end{abstract}

\section{Introduction}

Elliptical potentials are a key tool in the analysis of linear forecasters \footnote{see Chapter 11.7 of \cite{plg2006} for a general overview}. They are a specific kind of time-varying potentials \cite{plg2006} that have nice properties allowing to get simple and elegant bounds for important online learning forecasters such as \emph{ridge regression}. In all generalities, for a vector $u \in \R^d$, any symmetric positive matrix $M\in \R^{d\times d}$ defines an elliptical potential  as
\[
\phi_M(u) = \frac{1}{2} u^\top M u
\]
which admits as dual $\phi^*_M(u)=\frac{1}{2}\sqrt{uM^{-1}u}:=\frac{1}{2}\|u\|_{M^{-1}}$, which is also an elliptical potential.

In this short note, we study the properties of elliptical potentials of the form
\[
\phi_{M,p}(u) = \frac{1}{2} u^\top M^p u
\]
for some $p>0$. 
In particular, we prove an equivalent of the Elliptical Potential Lemma (see Lemma 11.11 and Theorem 11.7 in \cite{plg2006} and Theorem 19.4 in \cite{lattimore2020bandit}) that is a key component of the regret analysis of algorithm such as \LinUCB (or \LinRel~\cite{auer2002using}, \oful~\cite{abbasi2011improved}).

Our proof in Section~\ref{sec:proof} follows a different path from existing ones and provides a linear algebra insight on this long-known result. Note that the analysis of \LinRel~ by \cite{auer2002using} also relies on similar argument controlling increments of the singular values of the elliptic potential matrix, see Lemma 13 therein, though our approach allows for slightly more general conclusions.

\section{Setting and problem}
Let $u_1,\ldots,u_T$ be a sequence of arbitrary vectors in $\mathbb{R}^d$ such that $\|u_i\|_2 \leq 1$. For any $1\leq t\leq T$, we define
\[
V_t = \sum_{s=1}^{t-1} u_s u_s^\top +\lambda I\, .
\]

The matrix $V_t$ is symmetric positive definite and we denote $\lambda_1(t) \geq \lambda_2(t) \geq ... \geq \lambda_d(t) \geq 0$ its eigenvalues in decreasing order. Namely, we have $V_t = P_t \Sigma_t P_t^\top $ where $\Sigma_t = \text{diag}(\lambda_1(t),\ldots,\lambda_d(t))$ and $P_t$ is a rotation (orthogonal) matrix. 

The sequence of matrices $V_1,\ldots, V_t$ induces time-varying elliptical potential functions
\[
\forall x \in \mathds{R}^d, \quad \Phi_t (x) = x^\top V_t x = \|x\|_{V_t}^2= \|V_t^{1/2}x\|^2_2.
\]
Note also that for all $t\geq t$, $V_t$ is full rank and so $V_t^{-1}$ exists and the dual potential 
\[
\forall x \in \mathds{R}^d, \quad \Phi^*_t (x) = x^\top V^{-1}_t x = \|x\|_{V^{-1}_t}^2= \|V_t^{-1/2}x\|^2_2
\]
is also elliptic. 

In general, for any $p\in \mathbb R^+$, the matrices $V^p_t$ and $V^{-p}_t$ induce elliptical potentials as above.

We are interested in bounding from above sums of the form 
\begin{equation}
\label{eq:V_t}
   \sum_{t=1}^T \sqrt{u_t^\top V_{t}^{-p}u_t} =\sum_{t=1}^T \|u_t\|_{V_{t}^{-p}} \;. 
\end{equation}

In the rest of this document, we actually focus on bounding a slightly different sum, that is 
\begin{equation}
\label{eq:V_t+1}
  \sum_{t=1}^T \sqrt{u_t^\top V_{t+1}^{-p}u_t} =\sum_{t=1}^T \|u_t\|_{V_{t+1}^{-p}},   
\end{equation}

but the next lemma gives a simple bound that implies that a bound on \eqref{eq:V_t+1} is a bound on \eqref{eq:V_t}.

The following lemma relates $\sum_{t=1}^T \|u_t\|_{V_{t+1}^{-p}}$ with $\sum_{t=1}^T \|u_t\|_{V_{t}^{-p}} \leq 2^{p/2}\sum_{t=1}^T \|u_t\|_{V_{t+1}^{-p}}$, which is the relevant quantity in linear bandits.
\begin{lemma}
Assume that $\lambda \geq 1$. It holds that
\begin{equation}
\label{eq:V_t+1_to_V_t}
  \sum_{t=1}^T \|u_t\|_{V_{t+1}^{-p}} \leq \sum_{t=1}^T \|u_t\|_{V_{t}^{-p}} \leq 2^{p/2}\sum_{t=1}^T \|u_t\|_{V_{t+1}^{-p}}.  
\end{equation}

\end{lemma}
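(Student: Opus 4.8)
The plan is to prove the two stated inequalities term by term and then sum, reducing the whole statement to a single two-sided comparison of $V_t$ and $V_{t+1}$ in the positive-definite (L\"owner) order. Since $\|u_t\|_{V_{t+1}^{-p}}=\sqrt{u_t^\top V_{t+1}^{-p}u_t}$ and $x\mapsto\sqrt{x}$ is increasing, it suffices to establish the matrix sandwich
\[
\tfrac{1}{2^{p}}\,V_t^{-p}\preceq V_{t+1}^{-p}\preceq V_t^{-p}.
\]
Evaluating each term on the vector $u_t$, taking square roots, and summing over $t=1,\dots,T$ then reproduces \eqref{eq:V_t+1_to_V_t} exactly, the left-hand half of the sandwich producing the factor $2^{p/2}$ and the right-hand half the constant-free bound.

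First I would record the elementary rank-one sandwich $V_t\preceq V_{t+1}\preceq 2V_t$. The lower bound is immediate from $V_{t+1}-V_t=u_tu_t^\top\succeq 0$. For the upper bound I use both hypotheses: $\|u_t\|_2\le 1$ gives $u_tu_t^\top\preceq I$, while the definition of $V_t$ together with $\lambda\ge1$ gives $V_t\succeq\lambda I\succeq I$; hence $V_{t+1}=V_t+u_tu_t^\top\preceq V_t+I\preceq 2V_t$. This is the only point at which the assumption $\lambda\ge1$ is needed.

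Next I would pass to inverse $p$-th powers. Matrix inversion is order-reversing on the positive-definite cone, so the sandwich becomes $\tfrac12 V_t^{-1}\preceq V_{t+1}^{-1}\preceq V_t^{-1}$. Applying the map $A\mapsto A^{p}$, which by the L\"owner--Heinz inequality is operator monotone for $p\in(0,1]$, and noting that it commutes with the scalar factor $\tfrac12$, I obtain precisely $\tfrac{1}{2^{p}}V_t^{-p}\preceq V_{t+1}^{-p}\preceq V_t^{-p}$, closing the argument via the reduction above.

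The one genuinely delicate step is the invocation of L\"owner--Heinz: $A\mapsto A^{p}$ is operator monotone only on the range $p\in(0,1]$, and it is exactly there that the argument is airtight and the constant $2^{p/2}$ comes out sharply. I expect this to be the main obstacle for larger exponents. An alternative that still only covers $p\in(0,1)$ is the resolvent representation $A^{-p}=c_p\int_0^\infty s^{-p}(A+sI)^{-1}\,ds$ with $c_p=\sin(p\pi)/\pi>0$: the pair $(V_t+sI)^{-1}$ and $(V_{t+1}+sI)^{-1}$ obeys the same two-sided sandwich for every $s\ge0$, and integrating against the positive weight $c_p s^{-p}$ again gives the two bounds, though only with the weaker constant $\sqrt2$ in place of $2^{p/2}$. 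Since both natural routes are confined to $p\le1$ (for $p>1$ neither operator monotonicity nor the positivity of $c_p$ survives), I would read the intended scope of the lemma as $p\in(0,1]$, and reconciling the sharp constant with the full claim $p>0$ is the point I would scrutinize most.
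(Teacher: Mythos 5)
Your argument is, up to presentation, the same as the paper's: both reduce the claim termwise to the sandwich $V_t \preceq V_{t+1} \preceq 2V_t$ (the upper half via exactly your observation that $u_tu_t^\top \preceq I \preceq \lambda I$ and $V_t \succeq \lambda I$, which is where $\lambda \geq 1$ enters) and then pass to inverse $p$-th powers of the sandwich. The difference is that the paper performs this last step silently --- it asserts that $V_{t+1} \preceq 2V_t$ ``implies the RHS'' --- whereas you correctly identify that the passage from $A \preceq B$ to $B^{-p} \preceq A^{-p}$ is the L\"owner--Heinz theorem and holds as a matrix inequality only for $p \in (0,1]$; the same caveat applies to the left inequality $V_{t+1}^{-p} \preceq V_t^{-p}$, which the paper also deduces from $V_t \preceq V_{t+1}$ without comment. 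For $p > 1$ these implications are false in general (already $A \preceq B \not\Rightarrow A^2 \preceq B^2$ for $2\times 2$ matrices), so the scruple in your last paragraph is not a defect of your write-up: it pinpoints a genuine gap in the paper's own proof, which invokes this lemma for all $p > 0$, including the $p>1$ regime of Proposition~\ref{prop:main}, where the one-line deduction is not justified as stated.

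What you could not have seen blind is that the lemma does not actually need the matrix inequality: the quadratic form is evaluated only at $u_t$, the rank-one update direction, and that structure rescues the $p>1$ case. By Sherman--Morrison, for every $s \geq 0$ one has $(V_{t+1}+sI)^{-1}u_t = \bigl(1+u_t^\top (V_t+sI)^{-1}u_t\bigr)^{-1}(V_t+sI)^{-1}u_t$, and $u_t^\top(V_t+sI)^{-1}u_t \leq \|u_t\|_2^2/\lambda \leq 1$, so $u_t^\top(V_{t+1}+sI)^{-2}u_t \geq \tfrac14\, u_t^\top(V_t+sI)^{-2}u_t$; feeding this into the representation $x^{-p} = c_p \int_0^\infty s^{1-p}(x+sI)^{-2}\,ds$ with $c_p > 0$ (valid for $0 < p < 2$, the second-order analogue of the resolvent formula you cite) extends the right-hand inequality to $p \in (1,2)$, with constant $2$ in place of $2^{p/2}$. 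For $p \in (2,3]$ the identity $u_t^\top V_{t+1}^{-p}u_t = (V_{t+1}^{-1}u_t)^\top V_{t+1}^{-(p-2)}(V_{t+1}^{-1}u_t)$ combined with Sherman--Morrison on the inner factors and L\"owner--Heinz on the middle one even recovers the sharp constant $2^{-p}$. The left-hand inequality extends by the same devices since the Sherman--Morrison scalar is at most $1$. So the statement is salvageable beyond $p=1$, but only by exploiting that the test vector is $u_t$ itself; as written, both your proof and the paper's are complete only for $p \in (0,1]$, and your diagnosis of where the difficulty sits is exactly right.
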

\begin{proof}
The LHS of Eq.~\eqref{eq:V_t+1_to_V_t} follows, since $V_{t+1} \geq V_{t}$ by construction of $V_t$, i.e.
\[
\sum_{t=1}^T \|u_t\|_{V_{t+1}^{-p}} \leq \sum_{t=1}^T \|u_t\|_{V_{t}^{-p}}.
\]
Now, regarding the rightmost inequality in Eq.~\eqref{eq:V_t+1_to_V_t}: since $\lambda \geq 1$ and since $\|u_t\|_2 \leq 1$, we have that $u_t u_t^\top \leq \lambda I$. And so 
$$V_{t+1} = V_t+ u_t u_t^\top \leq V_t + \lambda I \leq 2V_t,\,$$
since $V_t \geq \lambda I$. This implies that $V_{t+1} \leq 2V_{t}$, which implies the RHS of Eq.~\eqref{eq:V_t+1_to_V_t}, namely
\[
 \sum_{t=1}^T \|u_t\|_{V_{t}^{-p}} \leq 2^{p/2}\sum_{t=1}^T \|u_t\|_{V_{t+1}^{-p}}.
\]
\end{proof}

\subsection{Main result}

Our main result is the following proposition that generalizes the so-called Elliptical Potential Lemma to arbitrary powers of the inverse covariance matrix $V_{t+1}$. 
\begin{proposition}[Generalized Elliptical Potential Lemma]
\label{prop:main}

For any sequence of  vectors $u_1,\ldots,u_T \in \mathbb{R}^{d\times T}$, we have

\begin{align*}
   &\text{if}~~p>1, &\sum_{t=1}^T \|u_t\|_{V_{t+1}^{-p}} 
    & \leq \sqrt{\frac{T d}{\lambda^{p-1} (p-1)}}\\
    &\text{if}~~p=1, & \sum_{t=1}^T \|u_t\|_{V_{t+1}^{-p}} &\leq \sqrt{Td\log\left(\frac{T+d\lambda}{d\lambda}\right)},\\
  & \text{if}~~p<1 &\sum_{t=1}^T \|u_t\|_{V_{t+1}^{-p}} 
     &\leq \sqrt{\frac{d^p}{1-p}T(T+d\lambda)^{1-p}}
\end{align*}

\end{proposition}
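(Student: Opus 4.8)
The plan is to reduce all three estimates to a single spectral potential argument. First I would strip off the $\sqrt{T}$ factor by Cauchy--Schwarz,
\[
\sum_{t=1}^T \|u_t\|_{V_{t+1}^{-p}} \le \sqrt{T}\,\Big(\sum_{t=1}^T u_t^\top V_{t+1}^{-p} u_t\Big)^{1/2},
\]
so it suffices to bound the energy $S_p := \sum_{t=1}^T u_t^\top V_{t+1}^{-p} u_t$ by $\frac{d}{(p-1)\lambda^{p-1}}$ when $p>1$, by $d\log\frac{T+d\lambda}{d\lambda}$ when $p=1$, and by $\frac{d^p}{1-p}(T+d\lambda)^{1-p}$ when $p<1$; squaring then reproduces the three displayed right-hand sides.

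The heart of the argument is a per-step telescoping inequality for $S_p$. I introduce the scalar function $f_p(x)=\frac{x^{1-p}}{1-p}$ for $p\neq 1$ and $f_1(x)=\log x$; a direct computation gives $f_p'(x)=x^{-p}$ and $f_p''(x)=-p\,x^{-p-1}<0$, so $f_p$ is \emph{concave} and increasing on $(0,\infty)$ for \emph{every} $p>0$. I then define the spectral potential $\psi_p(V)=\tr f_p(V)=\sum_i f_p(\lambda_i(V))$. Since the trace of a concave function of a symmetric matrix is concave (Davis), with matrix gradient $f_p'(V)=V^{-p}$, the first-order concavity inequality applied to $V_t\preceq V_{t+1}=V_t+u_tu_t^\top$ gives
\[
\psi_p(V_{t+1})-\psi_p(V_t)\ \ge\ \tr\!\big(V_{t+1}^{-p}(V_{t+1}-V_t)\big)\ =\ u_t^\top V_{t+1}^{-p} u_t .
\]
Summing telescopes to $S_p\le\psi_p(V_{T+1})-\psi_p(V_1)$, and since $V_1=\lambda I$ we have $\psi_p(V_1)=d\,f_p(\lambda)$.

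It remains to bound $\psi_p(V_{T+1})-\psi_p(V_1)$, where the sign of $1-p$ does the case splitting. For $p>1$ one has $f_p<0$, so discarding the (negative) terminal term yields $S_p\le-\psi_p(V_1)=\frac{d\lambda^{1-p}}{p-1}$, which is $T$-independent. For $p<1$ the terminal term dominates, $S_p\le\psi_p(V_{T+1})=\frac{1}{1-p}\sum_i\lambda_i(T+1)^{1-p}$, and since $x\mapsto x^{1-p}$ is concave, Jensen over the $d$ eigenvalues gives $\sum_i\lambda_i^{1-p}\le d^{p}(\tr V_{T+1})^{1-p}\le d^{p}(T+d\lambda)^{1-p}$, using $\tr V_{T+1}=d\lambda+\sum_t\|u_t\|_2^2\le d\lambda+T$. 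For $p=1$, $\psi_1(V)=\log\det V$, so $S_1\le\log\frac{\det V_{T+1}}{\det V_1}$, and AM--GM ($\det V_{T+1}\le(\tr V_{T+1}/d)^d$) with $\det V_1=\lambda^d$ gives $S_1\le d\log\frac{T+d\lambda}{d\lambda}$.

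I expect the delicate point to be the per-step inequality in the regime $p>1$: there the map $x\mapsto x^{-p}$ is \emph{not} operator monotone and $V\mapsto\tr(V^{1-p})$ is \emph{not} operator convex, so one cannot argue through a L\"owner ordering of $V_{t+1}^{-p}$ against $V_t^{-p}$. What I would stress is that only ordinary concavity of the spectral function $\psi_p$ is required — which holds for all $p>0$ by the Davis trace theorem and needs no operator concavity — together with the standard identity $\nabla\tr f(V)=f'(V)$. If one prefers to avoid invoking Davis, an equivalent route is the exact identity $\tr f_p(V_{t+1})-\tr f_p(V_t)=\int_0^1 u_t^\top(V_t+s\,u_tu_t^\top)^{-p}u_t\,ds$, obtained by differentiating along the segment, combined with monotonicity of $s\mapsto u_t^\top(V_t+s\,u_tu_t^\top)^{-p}u_t$; the latter follows from $M^{-p}=\frac{1}{\Gamma(p)}\int_0^\infty\tau^{p-1}e^{-\tau M}\,d\tau$ after differentiating the exponential, giving $\int_0^1(\cdots)\,ds\ge(\cdots)\big|_{s=1}=u_t^\top V_{t+1}^{-p}u_t$. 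Beyond this, only the sign bookkeeping across the three regimes remains.
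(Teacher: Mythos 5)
Your proof is correct, and it takes a genuinely different route from the paper's. The paper obtains the per-step control through explicit eigenvalue bookkeeping: diagonalizing $V_{t+1}$ and comparing it to a rotated copy of $\Sigma_t$, it proves (Lemma~\ref{lem:lambda_incr}, via Von Neumann's trace inequality and Weyl's inequality) that $\|u_t\|^2_{V_{t+1}^{-p}} \leq \sum_i \bigl(\lambda_i(t+1)-\lambda_i(t)\bigr)/\lambda_i(t+1)^p$, and then converts the sum over $t$ of these increments into $\int_\lambda^{\lambda_i(T)} x^{-p}\,dx$ by a Riemann-sum comparison against a decreasing function. You instead telescope the matrix potential $\psi_p(V)=\tr f_p(V)$ exactly, using the first-order concavity inequality $\psi_p(V_{t+1})-\psi_p(V_t)\ \ge\ \tr\bigl(V_{t+1}^{-p}\,u_tu_t^\top\bigr)$; this is legitimate because, as you correctly stress, only ordinary concavity of the trace function is needed (Davis, or Peierls--Bogoliubov), not operator concavity or operator monotonicity of $x\mapsto x^{-p}$ --- which indeed fails for $p>1$ and is exactly where a naive L\"owner-ordering argument would break. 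Your fallback derivation also checks out: with $M^{-p}=\Gamma(p)^{-1}\int_0^\infty \tau^{p-1}e^{-\tau M}\,d\tau$, the $s$-derivative of $u_t^\top (V_t+s\,u_tu_t^\top)^{-p}u_t$ involves $-\tau\int_0^1 \bigl(u_t^\top e^{-\alpha\tau M}u_t\bigr)\bigl(u_t^\top e^{-(1-\alpha)\tau M}u_t\bigr)\,d\alpha\le 0$, a product of two non-negative quadratic forms precisely because the perturbation direction coincides with the vector in the quadratic form. The two approaches are closely linked: the paper's per-step bound equals $\sum_i f_p'(\lambda_i(t+1))\bigl(\lambda_i(t+1)-\lambda_i(t)\bigr)$, which scalar concavity bounds by $\psi_p(V_{t+1})-\psi_p(V_t)$, and the paper's integral comparison lands at the same telescoped endpoint $\sum_i\bigl(f_p(\lambda_i(T+1))-f_p(\lambda)\bigr)$, so your terminal Jensen/AM--GM step using $\tr V_{T+1}\le T+d\lambda$ reproduces exactly the paper's constants in all three regimes. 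What the paper's route buys is per-eigenvalue increment information (the quantities $\epsilon_{i,t}^2$) obtained by elementary linear algebra without invoking convexity theory of spectral functions; what yours buys is brevity and genuine flexibility --- the argument goes through verbatim for any increasing concave $f$ with gradient $f'(V)$, which is arguably a cleaner realization of the generalization this note advertises, and at $p=1$ it specializes to the classical $\log\det$ potential proof.
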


We defer the proof of our main result to the last section of this document. The proof of our Generalized Elliptical Lemma relies on two important technical results that we present in the next section.

\subsection{Discussion on the optimality of the Generalized Elliptical Potential Lemma}

One important question is whether our bounds are tight, at least regarding the dependence in $T$. In their recent work, \cite[Section~3]{li2019nearly}, analyse the tightness of the Elliptical Potential Lemma by proving a lower bound on $\sum_t \|u_t\|_{V^{-1}_{t-1}}$ for the case $d=1$ in $\Omega(\sqrt{T\log(T)})$. This result proves that for the usual potential ($p=1$), the known upper bound is tight. We apply a similar technique for the case $d=1, p >1$ to obtain a $\Omega(\sqrt{T})$ lower bound that confirms that for general potentials, our upper bounds are also optimal in $T$.

\begin{lemma}
For any $T\geq 1$ and $p>1$, there exists a sequence $u_1,\ldots, u_T \in [0,1]$ such that if we let $V_0=\lambda>0$ and $V_t=V_{t-1}+u_tu_t^\top = V_{t-1}+u_t^2$, we have
\[
\sum_{t=1}^T \sqrt{\frac{u_t^2}{V_{t-1}^p}} \geq \sqrt{T}(\lambda+1)^{-p/2}
\]
\end{lemma}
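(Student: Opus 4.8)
The plan is to exhibit a single explicit sequence rather than argue over all sequences, since a lower bound only requires one good example. The natural candidate is the \emph{uniform} sequence $u_t = 1/\sqrt{T}$ for every $t$, which spreads the total budget $\sum_{t=1}^T u_t^2 = 1$ evenly across the $T$ rounds. This choice is admissible because $1/\sqrt{T} \in [0,1]$ for all $T \geq 1$.

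First I would compute the potentials explicitly. With $u_t^2 = 1/T$ we obtain
\[
V_{t-1} = \lambda + \sum_{s=1}^{t-1} u_s^2 = \lambda + \frac{t-1}{T},
\]
so that for every index in the range $1 \leq t \leq T$ we have the uniform upper bound $V_{t-1} \leq \lambda + \frac{T-1}{T} \leq \lambda + 1$. Since $p > 0$ and the base is positive, this yields $V_{t-1}^{-p} \geq (\lambda+1)^{-p}$ term by term.

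Next I would lower bound each summand and sum. Each term equals
\[
\sqrt{\frac{u_t^2}{V_{t-1}^p}} = \frac{1}{\sqrt{T}}\, V_{t-1}^{-p/2} \geq \frac{1}{\sqrt{T}}\,(\lambda+1)^{-p/2},
\]
and adding the $T$ identical lower bounds gives exactly $\sqrt{T}\,(\lambda+1)^{-p/2}$, which is the claim.

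There is essentially no obstacle here: the content lies entirely in guessing the right sequence, after which the estimate is a one-line computation with no telescoping. The only point worth verifying is that the crude bound $V_{t-1} \leq \lambda+1$ does not destroy the $T$-dependence, and indeed it does not, because the numerators $u_t = 1/\sqrt{T}$ are summed $T$ times. I would also remark that the restriction $p>1$ in the statement is not actually needed for the construction, which works for all $p>0$; it is imposed only to match the regime of the $\Theta(\sqrt{T})$ upper bound in Proposition~\ref{prop:main}, so that the resulting $\Omega(\sqrt{T})$ lower bound certifies optimality in $T$. A geometrically increasing sequence could be used to sharpen the constant, but the uniform choice already delivers the stated $\sqrt{T}$ rate.
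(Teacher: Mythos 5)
Your proof is correct and follows essentially the same route as the paper: the paper also takes the uniform sequence $u_t = \sqrt{1/T}$, observes $V_t = \lambda + t/T \in [\lambda, \lambda+1]$, and lower bounds each term by $T^{-1/2}(\lambda+1)^{-p/2}$. Your write-up is if anything slightly cleaner, since it indexes the potentials as $V_{t-1}$ consistently with the statement (the paper's proof display switches to $V_{t+1}$), and your remark that the construction works for all $p>0$ is a valid observation not made in the paper.
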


\begin{proof}
Let us define for any $t\leq T$,  $u_t = \sqrt{1/T}$. Then $V_t = (\lambda + t/T) \in [\lambda, \lambda+1]$. So that
     \begin{align*}
   \sum_{t=1}^T \|u_t\|_{V_{t+1}^{-p}} = \sum_{t=1}^T \| V^{-p/2}_{t+1} u_t \|_2 
    & \geq \sum_{t=1}^T T^{-1/2} (\lambda +1)^{-p/2} = \sqrt{T} (\lambda +1)^{-p/2}.
    \end{align*}

\end{proof}

\section{Proof of \cref{prop:main}}
\label{sec:proof}

\begin{proof}[Proof of Proposition~\ref{prop:main}]
The key element of this proof is \cref{lem:lambda_incr}, proved in the next section, that links $V_{t+1}^{-p}$ and the eigenvalue increments as follows:

\begin{equation}
    \label{eq:lambda_incr}
    \|u_t\|_{V_{t+1}^{-p}} \leq  \sqrt{\sum_{i=1}^d \frac{\lambda_{i}(t+1)-\lambda_{i}(t)}{\lambda_{i}(t+1)^p}}
\end{equation}

Now, importantly note that $\epsilon_{i,t}^2 = \lambda_{i}(t+1)-\lambda_{i}(t)\geq 0$  is non-negative by Weyl's inequality - See \cref{lem:weyl} - since $V_{t+1} = V_t + u_t u_t^\top$. We take adopt the convention that $\epsilon_{i,t}\geq 0$. We can rewrite Eq.~\eqref{eq:lambda_incr} as 
\begin{align}
   \|u_t\|_{V_{t+1}^{-p}} & \leq  \sqrt{\sum_{i=1}^d \frac{\epsilon_{i,t}^2}{\left(\lambda + \sum_{u=1}^{t}\epsilon_{i,u}^2 \right)^p}}, \label{eq:subst}
   \end{align}
since $\lambda_i(t+1) = \lambda + \sum_{u=1}^{t}\epsilon_{i,u}^2$.

   And so since for any sequence $(a_i)_i \in \mathbb R^{+T}$ we have by Jensen's inequality that $\sqrt{T\sum_{i=1}^T a_i^2} \geq \sum_{i=1}^T a_i$, we have
   \begin{align}
   \sum_{t=1}^T \|u_t\|_{V_{t+1}^{-p}}
   & \leq \sqrt{T \sum_{t=1}^{T} \sum_{i=1}^d \frac{\epsilon_{i,t}^2}{\left(\lambda + \sum_{u=1}^{t}\epsilon_{i,u}^2 \right)^p}}.%
   \label{eq:CS}
 \end{align}  

   Now since for any non-negative sequence $(\tilde \epsilon_t)_{t\leq T}$, $\lambda \in \mathbb R^+$ and any decreasing function $f: \mathbb R^+ \rightarrow \mathbb R^+$ we have $\int_{\lambda}^{\lambda + \sum_{t=1}^T \tilde \epsilon_t} f(x) dx \geq \sum_{t=1}^T \tilde \epsilon_t f\Big(\lambda + \sum_{u=1}^t \tilde \epsilon_u\big) $, we have
     \begin{align}
   \sum_{t=1}^T \|u_t\|_{V_{t+1}^{-p}} 
   & \leq \sqrt{T\sum_{i=1}^d \int_{\lambda}^{\lambda + \sum_{u=1}^{T}\epsilon_{i,u}^2} \frac{1}{x^p} dx} = \sqrt{T\sum_{i=1}^d \int_{\lambda}^{\lambda_{i}(T)} \frac{1}{x^p} dx} .
\end{align}

If $p>1$
     \begin{align}
   \sum_{t=1}^T \|u_t\|_{V_{t+1}^{-p}} 
    & \leq \sqrt{\frac{T d}{\lambda^{p-1} (p-1)}}.
\end{align}
If $p=1$
     \begin{align}
   \sum_{t=1}^T \|u_t\|_{V_{t+1}^{-p}} 
    & \leq \sqrt{T\sum_{i=1}^d \log(\lambda_{i}(T)/\lambda)}\\
    & \leq \sqrt{Td\log\left(\frac{T+d\lambda}{d\lambda}\right)},
\end{align}
since $\sum_i \lambda_{i}(T) \leq T +d\lambda$.

If $p< 1$
     \begin{align}
   \sum_{t=1}^T \|u_t\|_{V_{t+1}^{-p}} 
    & \leq \sqrt{T\sum_{i=1}^d \frac{\lambda_{i}(T)^{1-p}}{1-p} }\\
    & \leq \sqrt{\frac{Td\left( \frac{T+d\lambda}{d}\right)^{1-p}}{1-p}} = \sqrt{\frac{d^p}{1-p}T(T+d\lambda)^{1-p}} %
\end{align}
since $\sum_i \lambda_{i}(T) \leq T+d\lambda$.

\end{proof}

\section{Technical Results}

\begin{lemma}
\label{lem:lambda_incr}
For any bounded sequence of vectors $(u_t)_{t\leq T} \in \mathbb{R}^d$, let $\lambda_i(t)$ denote the $i-$th eigenvector of $V_t=\sum_s u_su_s^\top$. Then for any $p>0$, we have
\[
\|u_t\|_{V_{t+1}^{-p}}^2 \leq \sqrt{\sum_{i=1}^d \frac{\lambda_i(t+1)-\lambda_i(t)}{\lambda_i(t+1)^{p}} }  
\]

\end{lemma}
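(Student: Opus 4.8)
The plan is to pass to the eigenbasis of the updated matrix and reduce the claim to a one-dimensional comparison of two sequences, which is then settled by Ky Fan's eigenvalue principle together with a summation by parts. First I would diagonalize $V_{t+1} = P_{t+1}\Sigma_{t+1}P_{t+1}^\top$, write $p_i$ for the $i$-th column of $P_{t+1}$ (the eigenvector associated with $\lambda_i(t+1)$), and set $w = P_{t+1}^\top u_t$, so that
\[
\|u_t\|_{V_{t+1}^{-p}}^2 = u_t^\top V_{t+1}^{-p}u_t = \sum_{i=1}^d \frac{w_i^2}{\lambda_i(t+1)^p}.
\]
Multiplying the eigenrelation $(V_t + u_tu_t^\top)p_i = \lambda_i(t+1)p_i$ on the left by $p_i^\top$ and using $w_i = p_i^\top u_t$ gives the identity $w_i^2 = \lambda_i(t+1) - p_i^\top V_t p_i$. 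Setting $\epsilon_{i,t}^2 = \lambda_i(t+1)-\lambda_i(t)$ and $a_i = \lambda_i(t+1)^{-p}$, the claim reduces to the scalar inequality $\sum_i a_i w_i^2 \le \sum_i a_i \epsilon_{i,t}^2$.

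Two structural facts make this comparison tractable. Because $\lambda_1(t+1)\ge\cdots\ge\lambda_d(t+1)$ and $p>0$, the weights $a_i$ are nondecreasing in $i$; and because $\sum_i(\lambda_i(t+1)-\lambda_i(t)) = \mathrm{tr}(V_{t+1})-\mathrm{tr}(V_t) = \|u_t\|_2^2 = \sum_i w_i^2$, the two sequences $(w_i^2)$ and $(\epsilon_{i,t}^2)$ carry equal total mass. I would then establish the head-sum (majorization) inequality $\sum_{i=1}^k w_i^2 \ge \sum_{i=1}^k \epsilon_{i,t}^2$ for every $k$. Substituting the identity for $w_i^2$, this is equivalent to $\sum_{i=1}^k p_i^\top V_t p_i \le \sum_{i=1}^k \lambda_i(t)$, which is exactly Ky Fan's maximum principle for the symmetric matrix $V_t$ and the orthonormal system $\{p_1,\dots,p_k\}$: the sum of any $k$ Rayleigh quotients over an orthonormal family is at most the sum of the $k$ largest eigenvalues.

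I expect this Ky Fan step to be the crux, because the naive termwise bound $w_i^2 \le \epsilon_{i,t}^2$ is false: for $i=1$ the Rayleigh quotient obeys $p_1^\top V_t p_1 \le \lambda_1(t)$, which forces $w_1^2 \ge \epsilon_{1,t}^2$, the reverse inequality. The comparison must therefore be global, and the increasing weights are what let the large head-terms of $(w_i^2)$ be absorbed. Concretely I would finish by summation by parts: writing $a_i = \sum_{j\le i}\delta_j$ with increments $\delta_j\ge 0$, one gets $\sum_i a_i(w_i^2-\epsilon_{i,t}^2) = \sum_j \delta_j\big(\sum_{i\ge j} w_i^2 - \sum_{i\ge j}\epsilon_{i,t}^2\big) \le 0$, since each tail difference is nonpositive (equal totals minus a nonnegative head difference) and each $\delta_j\ge 0$. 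This yields $\|u_t\|_{V_{t+1}^{-p}}^2 \le \sum_i (\lambda_i(t+1)-\lambda_i(t))/\lambda_i(t+1)^p$.

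One caveat on the statement as printed: it carries both the square on the norm and a square root on the right-hand side simultaneously, and in that combination the inequality fails (already for $d=1$, $\lambda<1$ and $p>1$ one can make the right side strictly smaller than the left). What the downstream argument actually invokes in \eqref{eq:lambda_incr} is $\|u_t\|_{V_{t+1}^{-p}} \le \sqrt{\sum_i (\lambda_i(t+1)-\lambda_i(t))/\lambda_i(t+1)^p}$, which, both sides being nonnegative, is equivalent to $\|u_t\|_{V_{t+1}^{-p}}^2 \le \sum_i (\lambda_i(t+1)-\lambda_i(t))/\lambda_i(t+1)^p$. I would therefore state the lemma in one of these two equivalent (square-root-removed or square-removed) forms, each of which is exactly what the argument above establishes.
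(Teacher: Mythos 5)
Your proof is correct, and it reaches the paper's conclusion by a partly different route. Like the paper, you pass to the eigenbasis of $V_{t+1}$, and your identity $w_i^2=\lambda_i(t+1)-p_i^\top V_t p_i$ is exactly the paper's trace computation $\|u_t\|_{V_{t+1}^{-p}}^2=\tr\bigl(\Sigma_{t+1}^{-p}(\Sigma_{t+1}-\tilde\Sigma_t)\bigr)$ read entrywise, since the diagonal of $\tilde\Sigma_t=P_{t+1}^\top V_t P_{t+1}$ is $(p_i^\top V_t p_i)_i$; both arguments thus reduce to comparing $\sum_i \lambda_i(t+1)^{-p}\,p_i^\top V_t p_i$ with $\sum_i \lambda_i(t+1)^{-p}\lambda_i(t)$. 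Where you diverge is the key comparison step: the paper invokes a corollary of Von Neumann's trace inequality (its Lemma~\ref{lem:rota}) as a black box to get $\tr\bigl(\Sigma_{t+1}^{-p}\tilde\Sigma_t\bigr)\geq\tr\bigl(\Sigma_{t+1}^{-p}\Sigma_t\bigr)$, whereas you prove precisely the instance needed from first principles: Ky Fan's maximum principle gives the head-sum majorization $\sum_{i\leq k}p_i^\top V_t p_i\leq\sum_{i\leq k}\lambda_i(t)$, the trace identity gives equal total mass, and Abel summation against the nondecreasing weights $\lambda_i(t+1)^{-p}$ closes the argument. What your route buys is self-containedness (you have effectively inlined a proof of the special case of the von Neumann corollary that is actually used) and a sharper structural insight: your observation that the termwise bound $w_i^2\leq\epsilon_{i,t}^2$ is false --- indeed $w_1^2\geq\epsilon_{1,t}^2$ by the Rayleigh bound --- makes explicit why the comparison must be global, something the paper's citation conceals. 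What the paper's route buys is brevity, and a statement (Lemma~\ref{lem:rota}) that is reusable for other orderings and powers.

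Your caveat about the statement as printed is also well taken: the squared norm on the left combined with the square root on the right is a typo, and your diagnosis matches the paper's own proof, whose final chain establishes $\|u_t\|_{V_{t+1}^{-p}}^2\leq\sum_{i=1}^d\bigl(\lambda_i(t+1)-\lambda_i(t)\bigr)/\lambda_i(t+1)^{p}$ with no square root; it is the equivalent unsquared form that is then invoked in Eq.~\eqref{eq:lambda_incr}. (The statement also says ``eigenvector'' where it means ``eigenvalue,'' and omits the $\lambda I$ regularization that is needed for $V_{t+1}^{-p}$ to exist --- your implicit assumption $\lambda_d(t+1)>0$ is the right reading.)
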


\begin{proof}%
By definition, $V_{t+1}$ is the incremented covariance matrix:  
\begin{align*}
V_{t+1} = V_t + u_t u_t^\top =  P_{t+1} (\tilde \Sigma_t + \tilde u_t \tilde u_t^\top) P_{t+1}^\top =  P_{t+1} \Sigma_{t+1} P_{t+1}^\top  \, \\ \text{  where  }\,~~~~~~~~
\tilde u_t = P_{t+1}^\top u_{t}~~~~~~\tilde \Sigma_t = P_{t+1}^\top V_t P_{t+1}.
\end{align*}

Note that $\tilde \Sigma_t$
is a rotation of $V_t$, viewed in the orthonormalizing basis of $V_{t+1}$. Moreover, expanding the definition of $\tilde \Sigma_t$, we have
\[
  \tilde \Sigma_t = P_{t+1} V_t P_{t+1}^\top = P_{t+1} P_t \Sigma_t P_t^\top P_{t+1}^\top = R_t \Sigma_t R_t^\top, 
\]
and so $\tilde \Sigma_t$ is also a rotation of the diagonal matrix $\Sigma_t$ by the rotation matrix $R_t = P_{t+1} P_t$.

We use properties of the trace norm to lower bound the LHS. 
\[
\sum_{i=1}^d \frac{\lambda_i(t+1)-\lambda_i(t)}{\lambda_i^{p}(t+1)} := \tr\left( \Sigma_{t+1}^{-p} \Sigma_{t+1}\right) - \tr\left(\Sigma_{t+1}^{-p} \Sigma_t  \right) \geq \tr\left( \Sigma_{t+1}^{-p} \Sigma_{t+1} \right) - \tr\left(\Sigma_{t+1}^{-p} \tilde \Sigma_t  \right),
\]
where we used that $\tilde \Sigma_t = R_t \Sigma_t R_t^\top$ and applied Lemma~\ref{lem:rota} to ($\Sigma_{t+1}^p, \Sigma_t$) to prove that $\tr\left(\Sigma_{t+1}^{-p} \tilde \Sigma_t  \right) \geq \tr\left(\Sigma_{t+1}^{-p}  \Sigma_t  \right)$.

Now, by linearity of the trace, we have
\begin{align*}
\tr(\Sigma_{t+1}^{-p} \Sigma_{t+1}) - \tr(\Sigma_{t+1}^{-p} \tilde \Sigma_t)
&= \tr(\Sigma_{t+1}^{-p} (\Sigma_{t+1} - \tilde \Sigma_t) )\\
&= \tr\left(\Sigma_{t+1}^{-p} \tilde u_t \tilde u_t ^\top \right) = \sum_{i \leq d} \frac{\tilde u_i^2}{\lambda_i^p(t+1)}\\
&=\|  \tilde u_t \|^2_{\Sigma_{t+1}^{-p}} = \|u_t\|_{V_{t+1}^{-p}}^2,
\end{align*}
since $\tilde u_t = P_{t+1}^\top u_{t}$ and $\Sigma_{t+1}  = P_{t+1} V_{t+1} P_{t+1}^\top$. This concludes the proof.
\end{proof}

For completeness, we also report some important linear algebra lemmas that we used in our proof.

We start with a standard corollary of Von Neumann's trace inequality, see~\cite[page 340-341]{marshall1979inequalities}.
\begin{lemma}\label{lem:rota}
 Let $\mathcal{O}_d$ is the set of all rotation matrices. For any positive diagonal matrices $\Sigma, \Sigma'$, that both have their eigenvalues ordered in decreasing order, we have that for any $R,Q \in \mathcal{O}_d$
 \[
\tr\left( \Sigma^{-1} \Sigma'\right)  \leq \tr\left( Q\Sigma^{-1}Q^\top R\Sigma'R^\top \right) \leq \tr\left( \Sigma^{-1} \Sigma''\right),
\]
where $\Sigma''$ has the same eigenvalues of $\Sigma'$ but ordered in increasing order.

This implies in particular
\[
\tr\left( \Sigma^{-1} \Sigma'\right)  = \inf_{R \in \mathcal{O}_d} \tr\left( \Sigma^{-1} R\Sigma'R^\top \right).
\]

\end{lemma}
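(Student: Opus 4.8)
The plan is to reduce the claim to a statement about doubly stochastic matrices and then invoke the Birkhoff--von Neumann theorem together with the rearrangement inequality. First I would eliminate one of the two rotations: by cyclicity of the trace, $\tr(Q\Sigma^{-1}Q^\top R\Sigma' R^\top) = \tr(\Sigma^{-1}(Q^\top R)\Sigma'(Q^\top R)^\top)$, and since $S := Q^\top R$ is again a rotation matrix, it suffices to prove the two inequalities for $Q = I$, i.e. to bound $\tr(\Sigma^{-1} S\Sigma' S^\top)$ uniformly over $S \in \mathcal{O}_d$.

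Next I would expand the trace entrywise. Writing $\Sigma = \mathrm{diag}(\sigma_1,\dots,\sigma_d)$ and $\Sigma' = \mathrm{diag}(\sigma'_1,\dots,\sigma'_d)$ with both sequences decreasing, one computes $\tr(\Sigma^{-1}S\Sigma'S^\top) = \sum_{i,j} \frac{\sigma'_j}{\sigma_i} S_{ij}^2$. The key observation is that the matrix $M$ with entries $M_{ij} = S_{ij}^2$ is doubly stochastic, because the rows and columns of the orthogonal matrix $S$ are unit vectors. Hence the quantity of interest is a \emph{linear} functional of $M$ ranging over the Birkhoff polytope of doubly stochastic matrices.

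A linear functional attains its extrema at the vertices of the polytope, which by the Birkhoff--von Neumann theorem are exactly the permutation matrices. Thus $\tr(\Sigma^{-1}S\Sigma'S^\top)$ lies between $\min_\pi \sum_i \sigma'_{\pi(i)}/\sigma_i$ and $\max_\pi \sum_i \sigma'_{\pi(i)}/\sigma_i$ over permutations $\pi$. Since $\Sigma$ is ordered decreasingly, the sequence $(1/\sigma_i)_i$ is \emph{increasing}, while $(\sigma'_j)_j$ is decreasing, so the rearrangement inequality pins down the two extremes. I expect the main obstacle to be precisely this bookkeeping: because inverting $\Sigma$ reverses the ordering, the identity permutation pairs the increasing sequence $(1/\sigma_i)$ with the decreasing sequence $(\sigma'_i)$ and therefore yields the \emph{minimum} $\sum_i \sigma'_i/\sigma_i = \tr(\Sigma^{-1}\Sigma')$, whereas the order-reversing permutation $\pi(i)=d+1-i$ yields the \emph{maximum} $\sum_i \sigma'_{d+1-i}/\sigma_i = \tr(\Sigma^{-1}\Sigma'')$. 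This establishes both inequalities simultaneously.

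Finally, the displayed identity for the infimum is immediate: taking $Q = I$, the lower bound shows $\tr(\Sigma^{-1}R\Sigma'R^\top) \geq \tr(\Sigma^{-1}\Sigma')$ for every $R \in \mathcal{O}_d$, and $R = I$ attains equality, so the infimum equals $\tr(\Sigma^{-1}\Sigma')$. Alternatively, one could quote the ordered-eigenvalue form of von Neumann's trace inequality directly, applied to the symmetric matrices $Q\Sigma^{-1}Q^\top$ and $R\Sigma'R^\top$ whose spectra coincide with those of $\Sigma^{-1}$ and $\Sigma'$, but the Birkhoff argument keeps the proof self-contained.
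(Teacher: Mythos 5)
Your proof is correct, but it is worth noting that the paper does not actually supply a proof of this lemma at all: it states it as a standard corollary of von Neumann's trace inequality and cites Marshall and Olkin for it. The alternative you mention in your last sentence --- applying the ordered-eigenvalue form of von Neumann's inequality to $Q\Sigma^{-1}Q^\top$ and $R\Sigma'R^\top$ --- is essentially the paper's (implicit) route. Your main argument is a genuinely different, self-contained derivation, and every step checks out: cyclicity of the trace correctly reduces to a single orthogonal matrix $S = Q^\top R$; the expansion $\tr\left(\Sigma^{-1}S\Sigma'S^\top\right) = \sum_{i,j} (\sigma'_j/\sigma_i)\, S_{ij}^2$ is right; $(S_{ij}^2)_{i,j}$ is indeed doubly stochastic; a linear functional on the Birkhoff polytope is extremized at permutation matrices by Birkhoff--von Neumann; and since $(1/\sigma_i)_i$ is increasing while $(\sigma'_j)_j$ is decreasing, the rearrangement inequality correctly identifies the identity permutation as the minimizer (giving $\tr(\Sigma^{-1}\Sigma')$) and the reversal as the maximizer (giving $\tr(\Sigma^{-1}\Sigma'')$), with the infimum identity attained at $R = I$. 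Two small observations on what each approach buys. Your argument is elementary and makes visible that the result holds for \emph{all} orthogonal matrices, not merely rotations of determinant one, since only the squares $S_{ij}^2$ enter --- a point the paper's phrasing (``rotation matrices'') leaves ambiguous; the citation route buys only brevity. One pedantic caveat that does not affect correctness: the orthostochastic matrices $(S_{ij}^2)$ form a proper subset of the Birkhoff polytope, so your polytope bound is valid but the extremes need not a priori be attained by an orthogonal $S$; fortunately the lemma's inequalities only require the bound, and the one attainment claim you need (the infimum) is realized by $R = I$, which is a rotation.
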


Weyl's inequality for perturbations on Hermitian matrix is a standard but clever algebraic lemma \cite{tao-blog, horn1994topics}.
\begin{lemma}[Weyl's inequality]\label{lem:weyl}
Let $A$ and $B$ be two $d$-dimensional Hermitian matrices, then for all $i,j,k,r,s \in \{1, \ldots, d\}$ such that $j+k-n\geq i \geq r+s - 1$, we have 
\[
\lambda_j(A) + \lambda_k(B) \leq \lambda_{i}(A+B) \leq \lambda_r(A) + \lambda_s(B),
\]
where for a d-dimensional Hermitian matrix $M$ we write $\lambda_1(M) \geq \lambda_2(M) \geq \ldots \geq \lambda_d(M)$ for its (real) eigenvalues.

In particular, when $B$ is a (real) positive and symmetric matrix, we have for all $i \in \{1, \ldots, n\}$
\[
\lambda_i(A) \leq \lambda_i (A+B).
\]
\end{lemma}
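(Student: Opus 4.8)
The plan is to establish both sides of the inequality from a single tool, namely the Courant--Fischer variational characterization of eigenvalues combined with a dimension-counting argument on subspaces; the stated corollary for $B\succeq 0$ will then drop out as a special case. Recall that for a $d$-dimensional Hermitian matrix $M$ with ordered eigenvalues $\lambda_1(M)\geq\cdots\geq\lambda_d(M)$ and an associated orthonormal eigenbasis $m_1,\ldots,m_d$, the top subspace $\mathrm{span}(m_1,\ldots,m_k)$ has dimension $k$ and satisfies $x^*Mx\geq\lambda_k(M)\|x\|^2$ for every $x$ in it, whereas the bottom subspace $\mathrm{span}(m_k,\ldots,m_d)$ has dimension $d-k+1$ and satisfies $x^*Mx\leq\lambda_k(M)\|x\|^2$. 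First I would fix eigenbases $(a_i)$, $(b_i)$, $(c_i)$ of $A$, $B$, and $C:=A+B$ respectively, and then, for each inequality, pick one of these two kinds of subspace from each matrix so that the Rayleigh bounds point in the direction I want.

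For the upper estimate $\lambda_i(A+B)\leq\lambda_r(A)+\lambda_s(B)$ I would intersect $\mathrm{span}(a_r,\ldots,a_d)$ (dimension $d-r+1$, on which $x^*Ax\leq\lambda_r(A)\|x\|^2$), $\mathrm{span}(b_s,\ldots,b_d)$ (dimension $d-s+1$, on which $x^*Bx\leq\lambda_s(B)\|x\|^2$), and $\mathrm{span}(c_1,\ldots,c_i)$ (dimension $i$, on which $x^*Cx\geq\lambda_i(C)\|x\|^2$). Their dimensions sum to $2d+(i-r-s+2)$, so the hypothesis $i\geq r+s-1$ forces the intersection to contain a nonzero vector $x$; evaluating the three Rayleigh quotients on this $x$ and using linearity $x^*Cx=x^*Ax+x^*Bx$ yields $\lambda_i(C)\leq\lambda_r(A)+\lambda_s(B)$. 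For the lower estimate $\lambda_j(A)+\lambda_k(B)\leq\lambda_i(A+B)$ I would run the symmetric argument with $\mathrm{span}(a_1,\ldots,a_j)$, $\mathrm{span}(b_1,\ldots,b_k)$, and $\mathrm{span}(c_i,\ldots,c_d)$, whose dimensions sum to $2d+(j+k-d-i+1)$, so that the hypothesis $j+k-d\geq i$ (reading $n=d$) again produces a common nonzero vector on which the reversed Rayleigh bounds give the claim. Equivalently, the lower estimate is the upper estimate applied to $-A,-B$ through the identity $\lambda_i(M)=-\lambda_{d-i+1}(-M)$.

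The stated special case is then the instance $j=i$, $k=d$ of the lower bound: the index condition reads $j+k-d=i$ and is satisfied with equality, giving $\lambda_i(A)+\lambda_d(B)\leq\lambda_i(A+B)$, and since $B\succeq 0$ implies $\lambda_d(B)\geq 0$ we obtain $\lambda_i(A)\leq\lambda_i(A+B)$, which is exactly the form invoked in the proof of \cref{lem:lambda_incr}.

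The evaluation of the Rayleigh quotients is routine; the part that demands care is the bookkeeping. The hard part will be matching the dimension arithmetic to the two index hypotheses exactly, so that the three-fold intersection is provably nonempty precisely under $i\geq r+s-1$ (upper) and $j+k-d\geq i$ (lower), and simultaneously orienting each chosen subspace (top versus bottom eigenvectors) so that its Rayleigh bound has the correct sign for the estimate being proved. Getting either the dimension count or the orientation wrong breaks the argument, so these two choices are where I would concentrate the verification.
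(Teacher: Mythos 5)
Your proof is correct, but note that there is no in-paper argument to compare it against: the paper states Lemma~\ref{lem:weyl} as a standard fact and cites \cite{tao-blog, horn1994topics} rather than proving it, and your Courant--Fischer subspace-intersection argument is precisely the standard proof given in those references. The details all check out: the dimension counts $(d-r+1)+(d-s+1)+i\geq 2d+1 \Leftrightarrow i\geq r+s-1$ for the upper bound and $j+k+(d-i+1)\geq 2d+1 \Leftrightarrow j+k-d\geq i$ for the lower bound match the stated index hypotheses exactly (you correctly read the paper's $n$ as a typo for $d$, and likewise the range $\{1,\ldots,n\}$ in the corollary should be $\{1,\ldots,d\}$); the orientation of each Rayleigh bound is right in both intersections; the reduction of the lower estimate to the upper one via $\lambda_i(M)=-\lambda_{d-i+1}(-M)$ is a valid alternative; and your specialization $j=i$, $k=d$ with $\lambda_d(B)\geq 0$ gives exactly the monotonicity statement $\lambda_i(A)\leq\lambda_i(A+B)$ that the paper invokes in the proof of Lemma~\ref{lem:lambda_incr}. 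One substantive remark: for the application in the paper, $B=u_tu_t^\top$ is rank one, so $\lambda_d(B)=0$ whenever $d>1$; the paper's hypothesis ``positive'' must therefore be read as positive \emph{semidefinite}, and your argument, which only uses $\lambda_d(B)\geq 0$, correctly covers this case.
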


\section{Conclusions}
We provide this technical result with no direct application in mind. We believe that this linear algebra point of view on the Elliptical Potential Lemma may be helpful to prove other similar inequalities involving unusual potentials. 
It is possible that in some linear bandit models with heavy-tailed and/or censored feedback, such inequality would be required and we hope that this note will help future research on the topic.

\paragraph{Acknowledgements.} The work of A. Carpentier is partially supported by the Deutsche Forschungsgemeinschaft (DFG) Emmy Noether grant MuSyAD (CA 1488/1-1), by the DFG - 314838170, GRK 2297 MathCoRe, by the DFG GRK 2433 DAEDALUS (384950143/GRK2433), by the DFG CRC 1294 'Data Assimilation', Project A03, and by the UFA-DFH through the French-German Doktorandenkolleg CDFA 01-18 and by the UFA-DFH through the French-German Doktorandenkolleg CDFA 01-18 and by the SFI Sachsen-Anhalt for the project RE-BCI.

\bibliographystyle{plainnat}
\bibliography{references}

\end{document}